\newcommand{\sref}[1]{Sec.\ \ref{#1}}    
\newcommand{\algref}[1]{Algorithm \ref{#1}} 
\newcommand{\prg}[1]{\noindent\textbf{#1}} 
\newcommand{\KL}[2]{\ensuremath{KL\left({#1}\, \| \, {#2}\right)}}
\newtheorem{lemma}{Lemma}
\definecolor{Green}{RGB}{50,200,50}
\title{Continual Adaptation for Efficient Machine Communication}
\author{Robert D. Hawkins$^1$, Minae Kwon$^2$, Dorsa Sadigh$^{2,3}$, Noah D. Goodman$^{1,2}$ \\
  Departments of $^1$Psychology, $^2$Computer Science, and $^3$Electrical Engineering \\
  Stanford University, Stanford, CA 94305 \\
 \texttt{\{rxdh, mnkwon, ngoodman\}@stanford.edu}, \texttt{dorsa@cs.stanford.edu} \\}
\date{}
\begin{document}
\maketitle
\begin{abstract}
To communicate with new partners in new contexts, humans rapidly form new linguistic conventions.
Recent neural language models are able to comprehend and produce the existing conventions present in their training data, but are not able to flexibly and interactively adapt those conventions on the fly as humans do.
We introduce an interactive repeated reference task as a benchmark for models of adaptation in communication and propose a regularized continual learning framework that allows an artificial agent initialized with a generic language model to more accurately and efficiently communicate with a partner over time.
We evaluate this framework through simulations on COCO and in real-time reference game experiments with human partners.
\end{abstract}

\section{Introduction}

Communication depends on shared conventions about the meanings of words \cite{Lewis69_Convention}, but the real-world demands of language use often require agents to go \emph{beyond} fixed conventional meanings  \cite{Grice75_LogicConversation,Davidson86_DerangementOfEpitaphs}.
Recent work on \emph{pragmatic} and \emph{context-aware} models has approached this problem by equipping speaker and listener agents with the ability to explicitly reason about one another. 
Pragmatic reasoning allows listeners to infer richer intended meanings by considering counterfactual alternatives, and allows speakers to be appropriately informative, not merely truthful \cite{GoodmanFrank16_RSATiCS,andreas2016reasoning,fried2017unified,MonroeEtAl17_ColorsInContext,VedantamEtAl17_ContextAwareCaptions}.

\begin{figure}[t!]
\includegraphics[scale=.6]{./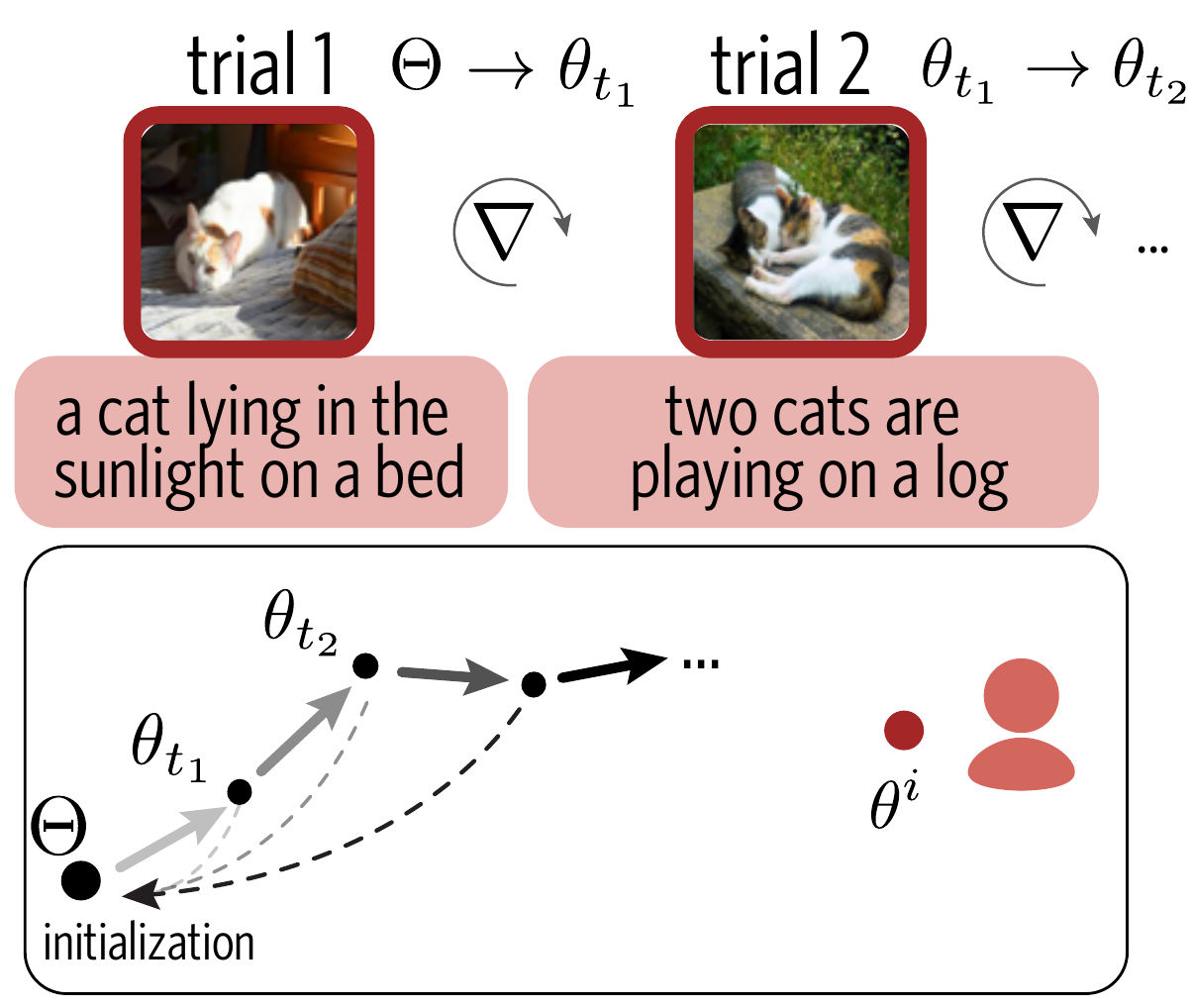}
\caption{We introduce a regularized continual learning approach allowing agents initialized with a pre-trained language model $\Theta$ to iteratively infer the language model $\theta^i$ used by a partner, over repeated interactions $\{t_1, t_2\dots\}$ in an online reference game.}
\vspace{-1em}
\label{fig:refgame}
\end{figure}

These models have largely focused on one-shot settings, where the context is the immediate visual environment.
In common interactive settings, however, the relevant context for pragmatic competence also includes the history of previous interactions with the same communication partner.
Human interlocutors are able to establish \emph{ad hoc} conventions based on this history \cite{ClarkWilkesGibbs86_ReferringCollaborative,clark1996using}, allowing for increasingly \emph{accurate} and \emph{efficient} communication. 
Speakers can remain understandable while expending significantly fewer words \cite{KraussWeinheimer64_ReferencePhrases,orita2015discourse,staliunaite_getting_2018, hawkins2019characterizing, stewart_characterizing_2020}. 

For example, consider a nurse visiting a bed-ridden patient at their home.
The first time the patient asks the nurse to retrieve a particular medication, they must painstakingly identify a specific bottle, e.g.~``the medicine for my back pain in a small blue medicine bottle labeled Flexeril in my bathroom.''
But after a week of care, they may just ask for the ``back meds'' and expect the nurse to know which bottle they mean.
Such flexibility poses a challenge for current pragmatic models. 
For an artificial agent to establish new conventions, as humans do, it must go beyond pragmatic reasoning at the single-utterance timescale to \emph{learn} about its partners over longer timescales. 

Here, we propose that the problem of \emph{ad hoc} convention formation can be usefully re-formulated as an inference problem amenable to online domain adaptation. 
Our approach is motivated by a growing body of evidence in cognitive science that humans quickly re-calibrate their expectations about how language is used by different partners \cite{grodner201110,Yildirim16_TalkerSpecificityQuantifiers}.
This empirical work highlights three key challenges facing a scalable adaptation approach.
First, because the target data comes from intentional agents, \emph{pragmatic reasoning} must be deployed throughout adaptation to strengthen inferences \cite{FrankGoodmanTenenbaum09_Wurwur}.
Second, because the data is sparse, strong adaptation risks catastrophic forgetting; yet, human speakers are able to revert to their background expectations for the next interlocutor \cite{WilkesGibbsClark92_CoordinatingBeliefs,MetzingBrennan03_PartnerSpecificPacts}. 
Third, the ability to ground the meanings of later, shorter utterances (e.g. ``back meds'') in the use of earlier, longer utterances requires a \emph{compositional} representation; otherwise the connection between the utterances is not clear \cite{hawkins2019characterizing}.

Our primary contribution is an online continual learning framework for transforming pragmatic agents into \emph{adaptive} agents that can be deployed in real-time interactions.
This framework is shown schematically in Fig.~\ref{fig:refgame}: after each trial, we take a small number of gradient steps to update beliefs about the language model used by the current partner.
To evaluate our framework, we first introduce a benchmark \emph{interactive} repeated reference task (Fig.~\ref{fig:task}) using contexts of natural images.
In \sref{sec:approach}, we introduce the three core components of our algorithm: (i) a contrastive loss objective incorporating explicit pragmatic reasoning, (ii) a KL regularization objective to prevent overfitting or catastrophic forgetting, and (iii) a data augmentation step for compositionally assigning credit to sub-utterances. 
In \sref{sec:evaluation}, we report experiments demonstrating that this algorithm enables more effective communication with naive human partners over repeated interactions.
Finally, in \sref{sec:analysis} we report a series of ablation studies showing that each component plays a necessary role, and close with a discussion of important areas for future research in \sref{sec:discussion}

\section{Related work}

\paragraph{Personalizing language models.}
Adapting or personalizing language models is a classic problem of practical interest for NLP, where shifts in the data distribution are often found across test contexts \cite{kneser1993dynamic,riccardi2000stochastic,bellegarda2004statistical,ben2010theory}. 
Our approach draws upon the idea of dynamically fine-tuning RNNs \cite{mikolov2010recurrent,krause2017dynamic}, which has successfully explained key patterns of human behavior in self-paced reading tasks \cite{van2018neural}.
We also draw on the regularization objectives proposed in this literatures \cite{li2007bayesian,liu2016investigations}. 
However, the interactive communicative setting we consider poses several distinct challenges from traditional speech recognition \cite{miao2015speaker} or text classification settings  \cite{blitzer2007biographies,glorot2011domain} for which adaptation is typically considered.
Partner-specific observations of language use are sparser, must be incorporated online, and are generated by intentional agents.

\paragraph{Incorporating discourse history.}

Previous work has incorporated discourse history in reference games using explicit co-reference detection \cite{roy2019leveraging} or contribution tracking \cite{devault_learning_2009} techniques. 
An alternative approach is to include embeddings of the history as conditional input to the model at test time \cite{haber2019photobook}.
Similar approaches have been proposed for sequential visual question answering \cite{ohsugi2019simple,choi2018quac}.
Rather than pre-training a fixed, monolithic language model and incorporating shared history on top of this model at test time, we suggest that the underlying language model itself ought to be continually adapted over the course of an interaction.

\paragraph{Bayesian models of adaptation.}

Models of adaptation in cognitive science are typically formulated in terms of (hierarchical) Bayesian belief-updating based on evidence of language use \cite{KleinschmidtJaeger15_RobustSpeechPerception,roettger2019evidential,delaney2019neural,HawkinsFrankGoodman17_ConventionFormation,SchusterDegen}.
In these models, each new observation is taken as statistical evidence about the partner's language model, allowing pairs to coordinate on shared expectations and ground new conventions in their partner's previous behavior (see Sec. \ref{sec:bayes}). 
While these models capture key theoretical properties of human adaptation, they do not scale well to natural-language applications, where neural networks are dominant.

\section{Approach}
\begin{figure}[t]
\includegraphics[scale=.6]{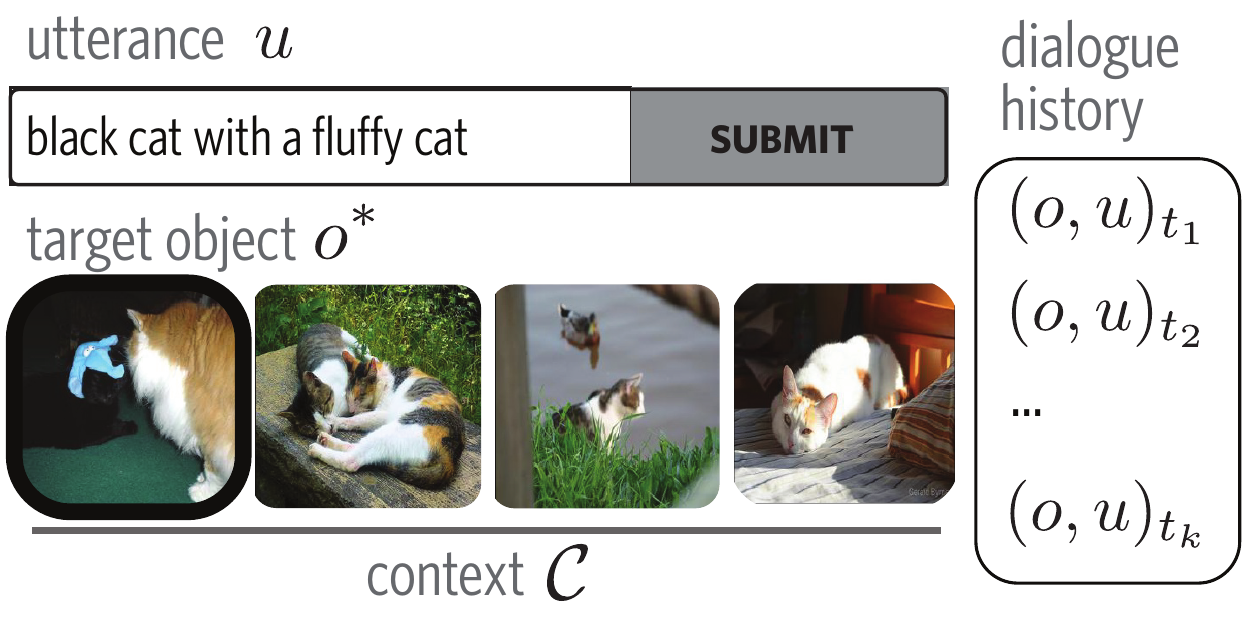}
\caption{In a repeated reference game, a speaker agent must repeatedly communicate the identity of the same objects in context to a listener agent.}
\label{fig:task}
\end{figure}

\label{sec:approach}
We begin by recasting convention formation as an online domain adaptation problem. 
As in previous computational approaches to pragmatics \cite[e.g.][]{GoodmanFrank16_RSATiCS,andreas2016reasoning}, we formulate this problem as an inference about another agent. 
The key theoretical idea is to expand the scope of pragmatic inference from the single-utterance timescale to evidence accumulated over longer timescales of an interaction. 
In addition to inferring a partner's intended meaning (or interpretation) for each individual utterance, an adaptive agent pools across previous utterances to infer the distinct but stable way their partner uses language.
Under this inference framework, an agent must both (1) begin with background expectations about language shared across many partners, and (2) have a mechanism to rapidly learn the specific language model used by the current partner.
Our work assumes a conventional neural language model as the starting point and focuses on the partner-specific inference problem.
In this section, we describe our repeated reference game benchmark task (\ref{sec:task}), review the underlying problem as it has been previously formulated in a Bayesian framework (\ref{sec:bayes}), and finally describe our algorithm for adapting neural language models (\ref{sec:neural}).

\subsection{Repeated reference game task}
\label{sec:task}

As a benchmark for studying domain adaptation in communication, we use the \emph{repeated reference game} task (Fig.\ \ref{fig:task}), which has been widely used in cognitive science to study partner-specific adaptation in communication \cite{KraussWeinheimer64_ReferencePhrases,ClarkWilkesGibbs86_ReferringCollaborative,WilkesGibbsClark92_CoordinatingBeliefs}.
In this task, a speaker agent and a listener agent are shown a context of images, $\mathcal{C}$ (e.g. four images of cats). 
On each trial, one of these images is privately designated as the \emph{target object}, $o^*$, for the speaker (e.g. the image with the thick border shown on the left).
The speaker agent thus takes the pair $(o^*, \mathcal{C})$ as input and returns an utterance $u$ (e.g. ``black cat with a fluffy cat'') that will allow the listener to select the target from $\mathcal{C}$.
The listener agent takes $(u, \mathcal{C})$ as input and returns a softmax probability for each image, which it uses to make a selection.
Both agents then receive feedback about the listener's selection and the identity of the target. 
Critically, the sequence of trials is constructed so that each image appears as the target several times.
For example, our evaluations loop through each target six times, allowing us to observe how communication about each image changes as a function of dialogue history (see Fig.\ S1 in Supplementary Materials for examples).


\subsection{The inference problem}
\label{sec:bayes}

We begin by assuming that agents represent the semantics of their language as a function relating natural language utterances $u$ to actual states of the world $o$ (here, images). 
We further assume that this function belongs to a family parameterized by $\theta$, and denote the parameter used by a particular agent $i$ with $\theta^i$ (see Fig.~\ref{fig:refgame}).
If an artificial agent knows the true value of $\theta^i$ -- their current partner's semantics\footnote{Traditionally, this semantic function is truth-conditional, mapping utterance-state pairs to Boolean values, but recent approaches have shifted to more graded, real-valued functions such as those implemented by neural networks.} -- they are in a better position to understand them, and to be understood in turn.
However, because $\theta^i$ is not directly observable and $\theta$ varies across partners and contexts, it must be inferred \cite{BergenLevyGoodman16_LexicalUncertainty}.
Furthermore, it is in the agent's best interest to use its updated beliefs about its partner's $\theta$ to guide its own production and interpretation.
An important consequence of this formulation is that conventionalization, the process by which parties converge on an efficient way to refer to something, emerges naturally as a consequence of mutual adaptation, the process by which each party independently tries to infer their interlocutor's language model \cite{SmithGoodmanFrank13_RecursivePragmaticReasoningNIPS,hawkins2020generalizing}.

This is the central computational problem of adaptation, which we formalize as follows.
Following Bayes Rule, the adaptive agent's beliefs about $\theta^i$, conditioning on observations $D^i$ from the shared history of interactions in that context, are:
\begin{equation}
	P(\theta^i | D^i, \Theta)  \propto P(D^i | \theta^i) P(\theta^i | \Theta)
	\label{eq:lexicon_update}
\end{equation}
This formulation decomposes the inference into two terms, a prior term $ P(\theta^i | \Theta)$ and a likelihood term $P(D^i | \theta^i)$.\footnote{For the rest of this paper, we only consider the case of adapting to one partner, so we will drop the partner index $i$.}
The prior captures the idea that different partners share some general features of the semantics, represented by $\Theta$, since they speak the same language; in the absence of partner-specific information, the agent ought to be regularized toward this background knowledge. 

The likelihood term, on the other hand, accounts for direct evidence of language use.
It represents an explicit forward model of an agent: different latent values of $\theta$ generate different observable actions.
In other words, the standard single-utterance pragmatic inference problem is nested within the longer-timescale inference about $\theta$. 
While explicit reasoning about the other agent is typically considered at the time of action selection (i.e. when the speaker is choosing an utterance, or when the listener is choosing a referent; \citealp{GoodmanFrank16_RSATiCS,andreas2016reasoning}), this likelihood term importantly incorporates such reasoning at the time of \emph{adaptation} (i.e. when updating beliefs about $\theta$ based on previous actions; \citealp{FrankGoodmanTenenbaum09_Wurwur, SmithGoodmanFrank13_RecursivePragmaticReasoningNIPS}).


\subsection{Continual adaptation for neural models}
\label{sec:neural}

If we let $\theta$ be the weights of an image-captioning network, then the background knowledge shared across partners, $\Theta$, corresponds to a pre-trained initialization, and conditioning on partner-specific data under a Bayesian prior corresponds to regularized gradient descent on $\theta$.
We exploit this connection to derive an online continual learning scheme that addresses the challenges of adapting to a human partner in a repeated reference game task. 


\paragraph{Architecture and algorithm overview.}
Concretely, we consider an architecture that combines a convolutional visual encoder (ResNet-152) with an LSTM decoder \cite{vinyals_show_2015}.
The LSTM takes a 300-dimensional embedding as input for each word in an utterance and its output is linearly projected back to a softmax distribution over the vocabulary size.
To pass the visual feature vector computed by the encoder into the decoder, the final layer of ResNet was replaced by a fully-connected adapter layer.
This layer was jointly pre-trained with the decoder on the COCO training corpus  \cite{lin2014microsoft} and frozen. 
The COCO corpus contains images of common objects, each annotated with multiple human captions. 
The CNN-LSTM architecture allows an agent to select utterances, by using beam search over captions given a target image as input, and also to select objects from the context, by evaluating the likelihood of the caption for each image in context and taking the most likely one. 

Critically, we assume the agent will select actions on each trial using the value of $\theta$ it believes its partner to be using, so updating its own model is equivalent to updating expectations about its partner's model.
Using the pre-trained model as our initialization, we can fine-tune the decoder weights (i.e. word embeddings, LSTM, and linear output layer) within a particular communicative interaction.
Our algorithm is specified in \algref{alg:listener_update}.
Upon observing the utterance-object pair produced on each trial of the repeated reference game (Line 3), we take a small number of gradient steps updating the model weights to reflect the usage observed so far (Lines 4-7).
Our adaptation objective function (Line 6) is built from combining a standard cross-entropy term with a KL-based regularization term to prevent catastrophic forgetting and a contrastive term to incorporate pragmatic reasoning about the visual context. 
In the following sections, we explain these terms and also introduce a final component of our approach: compositional data augmentation.

\begin{algorithm}[t!]
	\caption{Update step for adaptive model}
    \label{alg:listener_update}
\begin{algorithmic}[1]
    \STATE {\textbf{Input}: $\theta_t$: weights at time $t$}
    \STATE {\textbf{Output}: $\theta_{t+1}$: updated weights}
    \STATE {\textbf{Data}: ($u_t, o_t$): observed utterance and object}
    \FOR{step}
        \STATE sample augmented batch $\mathbf{u} \sim \mathcal{P}(u_t)$\;
        \STATE let $f_{\theta_t} =  \log P_{\theta_t}(\mathbf{u} | o_t)  + \log P_{\theta_t}(o_t | \mathbf{u}) - \text{\hspace{1.65cm}}\text{reg}(o_{1:t-1},u_{1:t-1})$
        \STATE update $\theta_{t} \leftarrow \theta_t + \beta\nabla f_{\theta_t}$
   \ENDFOR
\end{algorithmic}
\end{algorithm}

\prg{Utterance likelihood.} 
For our benchmark repeated reference game, the data obtained on trial $t$ is a paired observation of an utterance $u$ and an intended object of reference $o$. 
The simplest learning objective for $\theta$ is the standard cross-entropy loss: the likelihood of this utterance being produced to convey the intended target in isolation: $P_\theta(u | o)$.
This likelihood can be computed directly from the neural captioning model, where the probability of each word in $u=\{w_0, \dots, w_\ell\}$ is given by the softmax decoder output conditioned on the sentence so far, $P_{\theta_t}(w_i |o, w_{-i})$, so:
\begin{equation}
	P_{\theta_t}(u | o) \propto \prod_{i<\ell} P_{\theta_t}(w_i |o, w_{-i})
	\label{eq:speaker}
\end{equation}
\prg{Contrastive likelihood.} 
The same object-utterance pairs can be viewed as being generated by a listener agent selecting $o$ \emph{relative to the other distractors} in the immediate context $\mathcal{C}$ of other objects.
This reasoning requires inverting the captioning model to evaluate how well the utterance $u$ describes each object in $\mathcal{C}$, and then normalizing:
\begin{equation}
 P_{\theta_t}(o | u, \mathcal{C}, \theta_t) \propto 
   P_{\theta_t}(u | o)P(o) 
	\label{eq:listener}
\end{equation}
This inversion is based on models of one-shot pragmatic inference in reference games   \cite{GoodmanFrank16_RSATiCS,andreas2016reasoning,VedantamEtAl17_ContextAwareCaptions,cohn2018pragmatically}.
While optimizing the utterance likelihood serves to make the observed utterance more likely for the target in \emph{isolation}, optimizing the contrastive likelihood allows the agent to make a stronger inference that it does \emph{not} apply to the distractors.

\prg{KL Regularization.} 
Fine-tuning repeatedly on a small number of data points presents a clear risk of catastrophic forgetting \cite{robins_catastrophic_1995}, losing our ability to produce or understand utterances for other images. 
While limiting the number of gradient steps keeps the adapted model somewhat close to the prior, we will show that this is not sufficient (see \sref{sec:KL_analysis}).
Because small differences in weights can lead to large differences in behavior for neural models, we also consider a regularization that tethers the \emph{behavior} of the adapted model close to the behavior at initialization.
Specifically, we consider a \emph{KL regularization} term that explicitly minimizes the divergence between the captioning model's output probabilities before and after fine-tuning for unseen images \cite{yu2013kl, galashov_information_2018}.
It is not tractable to take the KL divergence over the (nearly infinite) space of all possible natural-language utterances. 
Hence, we approximate the divergence incrementally by expanding from the maximum a posteriori (MAP) word denoted $w^*$ at each step according to the initial model $P_{\Theta}$ (see Appendix A):
\begin{equation}
\sum_{i<\ell} 
  D_{\mathrm{KL}}
	\left[
	P_{\Theta}(w_i |o, w^*_{-i})
	\parallel
	P_{\theta_t}(w_i|o, w^*_{-i})
	\right]
\label{eq:speaker_reg}
\end{equation}
where $\ell$ is the length of the MAP caption.
This loss is then averaged across random images sampled from the full domain $\mathcal{O}$, not just those in context.


\prg{Compositional data augmentation.} 
Agents should be able to infer previous successes on a longer utterance (e.g. ``two men are sitting on a bench''), that the component parts of this utterance (e.g. ``two men'', ``a bench'') are also likely to convey the intended meaning.
In the absence of a (weakly) compositional representation, a speaker has no way of doing credit assignment: observing that a listener successfully chose the target upon hearing a long utterance only provides further evidence for the full utterance.
Fine-tuning an LSTM architecture will increase the likelihood of sub-strings to some extent after a successful selection, but this is insufficient for two reasons.
First, not all sub-strings are syntactically well-formed referring expressions (e.g. ``two men are''), and the LSTM lacks a syntactic representation to represent such coherence.
Second, the likelihood of the full utterance will always be increased by more than any sub-utterance.

To address these problems, we explored a data augmentation step that introduces a stronger compositionality bias via referential entailments \cite{young2014image}.
After each trial, we augmented the speaker's utterance $u$ with a small denotation graph $D(u)$ containing the set of all noun phrases found in the syntactic dependency parse of $u$, and optimize our objective function on batches of these entailments.
By independently updating expectations about well-formed entailments alongside the longer utterances that were actually produced, we hypothesized that our model could more naturally ground shorter, conventionalized labels in the shared history of successful understanding. 

\prg{Local rehearsal.}
A second form of augmentation we explore is \emph{local rehearsal}: at each step we include data from the history of interaction $D = \{(u, o)\}_{1:t}$  up to the current time $t$, to prevent overfitting to the most recent observation.
In practice we subsample batches from the interaction history in a separate loss term with its own weighting coefficient, ensuring the new data point and a batch of its subphrase augmentations are used in every gradient step. We initialize $D$ with the utterance the model generates for each object.

\begin{figure*}[t!]
\centering
\includegraphics[scale=1.05]{./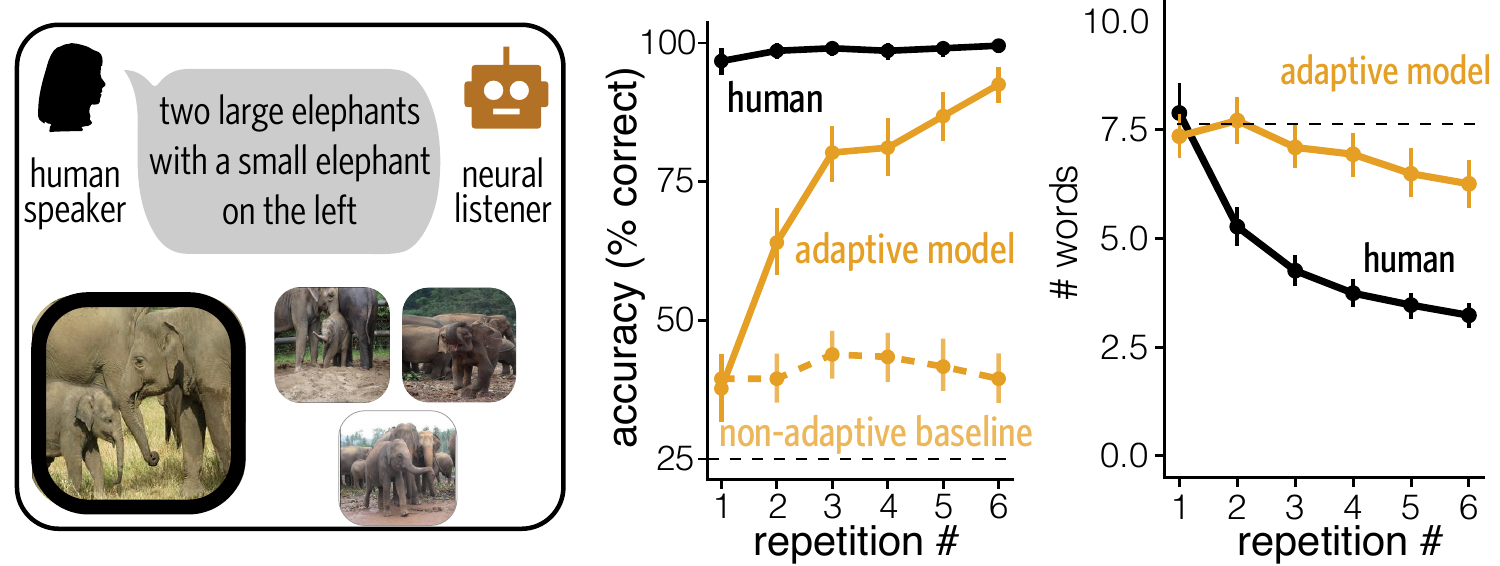}
\caption{Communication becomes more efficient and accurate as our model adapts to a human speaker. Example contexts and utterances are shown. Error bars are bootstrapped 95\% CIs.}
\label{fig:results}
\end{figure*}

\section{Interactive human evaluations}
\label{sec:evaluation}
In this section, we evaluate our model's performance  in \emph{real-time interactions} with human speakers. 
Our artificial agent was paired with human partners to play a repeated reference game using images from the validation set of the COCO corpus \cite{lin2014microsoft,chen2015microsoft} as the targets of reference.
Critically, we constructed contexts to create a diagnostic mismatch between the COCO pre-training regime and the referential test regime. 
Specifically, we chose contexts such that the model's \emph{accuracy} --- the probability of identifying the target --- would be poor at the outset. 

To obtain appropriately challenging contexts, we used our pre-trained model's own visual encoder to find sets of highly similar images within the same category.
We first extracted 256-dimensional feature vectors for each image from the final, fully-connected layer of the encoder. 
We then used these features to partition the images into 100 groups using a $k$-means algorithm, sampled one image from each cluster, and took its 3 nearest neighbors in feature space, yielding 100 unique contexts of 4 images each. 
This adversarial process explicitly identified contexts that our pre-trained captioning model would be poorly equipped to distinguish. 

\paragraph{Human baselines.}
We first investigated the baseline performance of human speakers and listeners. 
We recruited 108 participants (54 pairs) from Amazon Mechanical Turk and automatically paired them into an interactive environment with a chatbox.
For each pair, we sampled a context and constructed a sequence of 24 trials structured into 6 repetition blocks, where each of the 4 images appeared as the target once per block. 
We prevented the same target appearing twice in a row and scrambled the order of the images on each player's screen on each trial. 
We found that pairs of humans were highly accurate, with performance consistently near ceiling (Fig.\ \ref{fig:results}, black lines).
At the same time, their utterances grew increasingly efficient: their utterances reduced in length across repeated interaction ($t~=~25.8,~p~<~0.001$).\footnote{Note that our contexts were selected to be challenging under the impoverished language prior of our pre-trained listener model, but were not expected to require any adaptation for human listeners to achieve high accuracy; see \citet{hawkins2019characterizing} for a more challenging stimulus domain used to elicit strong human adaptation.}

\subsection{Model performance}
\label{sec:listening_task}
Next, we evaluated the performance of our adaptive model in the listener role (for a similar analysis of our model in the speaker role, see Appendix D).
We recruited 57 additional participants from Amazon Mechanical Turk who were told they would be paired with an artificial agent learning how they talk.
This task was identical to the one performed by pairs of humans, except we allowed only a single message to be sent through the chatbox on each trial. 
This message was sent to a server where the model weights from the previous trial were loaded to the GPU, used to generate a response, and updated for the next round.
The approximate latency for the model to respond was 5-10s depending on how many games were running simultaneously.

\begin{figure*}[t]
\centering
\includegraphics[scale=0.69]{./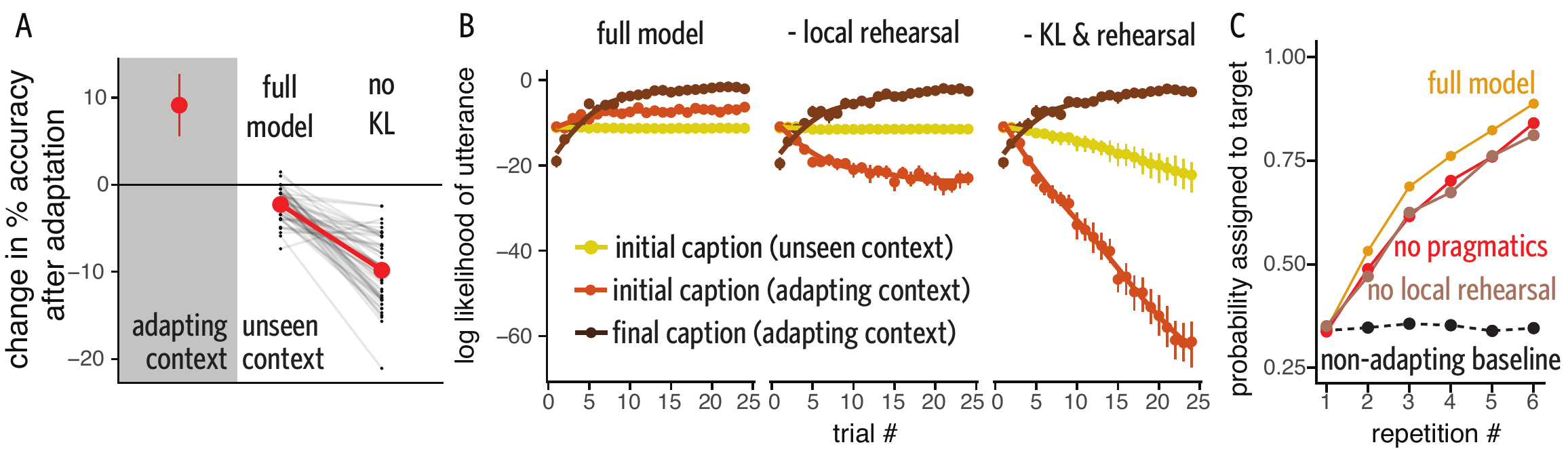}
\caption{Ablation studies of the listener model. (A-B) KL regularization prevents catastrophic forgetting over the course of adaptation. (C) Local rehearsal and pragmatic reasoning independently contribute to successful listener adaptation. Error bars are bootstrapped 95\% CIs.}
\label{fig:forgetting}
\end{figure*}

For our adaptation objective function, we used a linear combination of the utterance and contrastive losses and the KL-regularization (see Appendix B for hyper-parameter settings).
We also used local rehearsal and compositional data augmentation. 
While the pre-trained model initially performs much less accurately than humans, as expected, our adaptive listener shows rapid improvement in accuracy over the course of interaction (Fig.\ \ref{fig:results}).
In a mixed-effects logistic regression predicting trial-level accuracy, including pair- and image-level random effects, we found a significant increase in the probability of a correct response with successive repetitions, $z=12.6,~p <0.001$, from 37\% correct (slightly above chance levels of 25\%) to 93\% at the end.
To test whether this success can be attributed to the initial quality of the listener model, or to \emph{humans} adapting to a relatively unchanging model, we examined the performance of a non-adapting baseline (i.e. a model using the pre-trained model weights on every trial).
We evaluated this baseline offline, using the utterances we recorded from the online games. 
This baseline showed no improvement, staying only slightly above chance accuracy over the course of the task.

\section{Analysis}
\label{sec:analysis}

We now proceed to a series of ablation analyses that analyze the role played by each component of our approach.
These analyses involve offline simulations conducted on the data we collected in the previous section.

\subsection{KL regularization prevents catastrophic forgetting}
\label{sec:KL_analysis}

We begin by testing the effectiveness of our KL regularization term (Eq.~\ref{eq:speaker_reg}) for preventing catastrophic forgetting.
We reasoned that changing expectations in the adaptation context should not interfere with expectations in other, unseen contexts.
To directly analyze such interference, we adapted an ablated variant of our listener model over the course of a game with one context of images, and then measured its average accuracy identifying the target given the initial utterances produced by different speakers on different (unseen) contexts.
We then compared this test accuracy with the baseline accuracy achieved by an \emph{unadapted} listener model.

We cross-validated these estimates over many adaptation contexts.
Specifically, because the baseline was already close to chance on `challenging' contexts (Fig.\ \ref{fig:results}), we used an additional set of 52 human-human interactions we collected in easier contexts (where images belonged to different COCO categories) to better expose degradations in performance. 
While accuracy significantly increased compared to baseline in the adapting context for both variants, we found a 10\% drop in accuracy on unseen contexts for the ablated variant with no KL term, compared to only a 2\% drop in the model using the full loss ($t(51)=12.2, p <0.001$ in a paired $t$ test; see Fig.\ \ref{fig:forgetting}A).

Next, to more thoroughly probe the \emph{progression} of interference, we conducted a second analysis examining the likelihood assigned to different captions by the listener model over the course of adaptation.
We tracked both the initial captions produced by the pre-trained initialization in the adapting context and in unseen contexts. 
To obtain unseen contexts, we sampled a set of images from COCO that were not used in our experiment, and generated a caption for each.
We also generated initial captions for the \emph{target} objects in the adapting context. 
We recorded the likelihood of all of these sampled captions under the model at the beginning and at each step of adaptation until the final round. 
Finally, we greedily generated an utterance for each target at the \emph{end} and retrospectively evaluated its likelihood at earlier points during adaptation.

These three likelihood curves are compared for ablated models in Fig.\ \ref{fig:forgetting}B.
By definition, the final caption in the adapting context becomes more likely in all cases (brown line).
Without the local rehearsal mechanism, the initial caption the model expected in the adapting context becomes less likely as it is replaced by the human partner's preferred caption (red line).
Only when the KL term is removed, however, do we find interference with the model's expectations for unseen contexts (yellow line).
Thus, we find that KL regularization plays a critical role in preventing catastrophic forgetting.

\subsection{Pragmatics and local rehearsal improve listener performance}

Next, we consider the contributions of other key components for success.
Specifically, we constructed ablated variants of our model with no pragmatics (i.e. no contrastive loss term during adaption), and with no local rehearsal (i.e. no ability to keep training on batches from the history of the interaction).
We simulated adaptation for these ablated variants on the 57 games where human speakers produced utterances for our listener model, and examined the probability assigned to the target after hearing each utterance (Fig.\ \ref{fig:forgetting}C).
We found in a mixed-effects regression that each of these components independently contributes to success, as the ablated variants perform significantly worse than the full model (${z=2.1,~p = 0.03}$ and ${z=3.6,~p < 0.001}$ for variants with no local rehearsal and no pragmatics, respectively; see Appendix C for regression details). 
Compared to an entirely non-adapting baseline, however, even these ablated variants improved over time.

\section{Discussion}
\label{sec:discussion}

\paragraph{Relationship to human adaptation}

The theoretical ties between our approach and proposed cognitive models of human adaptation raises several questions. 
First, it is possible that improved performance could be driven by human \emph{speakers} adapting in response to our listener agent's successes and errors rather than the other way around. 
While some degree of human adaptation is inevitable -- for example, humans only seemed to shorten their utterances once our models' accuracy began to rise -- human adaptation alone is insufficient to explain gains in accuracy.
If these gains were due to human speakers gradually discovering utterances that a pre-trained (non-adapting) model could understand, we would expect some gains in the accuracy of our baseline non-adapting model over time. 
Furthermore, we found that the handful of human speakers that dramatically changed their descriptions across rounds actually performed worse than those who adhered to consistent descriptions.

With this said, the extent of adaptation in human-computer dialogue is known to be affected by human participants' expectations about the artificial agent \cite{branigan2011role,koulouri2016and}, potentially including expectations about whether it will be adaptive or not.
Bi-directional adaptation effects may be more pronounced in other dialogue settings where the human and model both speak, giving the human an opportunity to re-use utterances produced by the model.
It will be important for future work to evaluate non-adaptive baselines \emph{online} rather than offline, as we did, in order to observe exactly how humans respond to, or compensate for, non-adaptive agents.

Second, it is natural to ask how our model would perform in the \emph{speaker} role with a human \emph{listener}, using their (sparse) response success as feedback rather than their utterances.
In ongoing work, we have found that the same approach allows a (pragmatic) model to converge to more efficient conventions in the speaker role (see Appendix D in supplemental), such that the same language model can flexibly switch between speaker and listener roles with the same human partner. 
Still, it is unlikely that this speaker model reduces in the same way as human speakers do (see Supplemental Fig.~S1 for examples). 
Differences may reflect additional accessibility, grammaticality, or compositionality biases in humans; direct comparisons remain an open question for cognitive science. 

Third, scaling the principles of computational-level Bayesian cognitive models to neural networks capable of adapting to natural language in practice required several algorithmic-level innovations which are not yet plausible proposals for human cognition \cite{marr2010vision}. 
While our local rehearsal mechanism may be consistent with replay mechanisms in human memory, our KL regularization mechanism implausibly requires earlier parameter values of the model to be held in memory.
Our data augmentation mechanism was introduced specifically to compensate for the inability of the LSTM architecture to propagate the use of a referring expression to its entailments, but we expect that human language processing mechanisms achieve this effect by different means.
We expect further work to refine these algorithmic components as neural language models continue to advance.

\paragraph{Relationship to language learning}

Our work is also related to broader efforts to ground language learning and emergent communication in usage, where artificial agents are trained to use language \emph{from scratch} by playing interactive reference games \cite{wang2016learning,lazaridou2016multi,wang2017naturalizing,chevalier2018babyai}. 
Rather than starting our agents from scratch, we have emphasized the need for continual, partner-specific learning even among mature language users with existing priors.
This raises another question: how are these different timescales of learning related to one another?
One possibility is that the need to quickly adapt one's language to new partners and contexts over short timescales may serve as a functional pressure shaping languages more broadly.

Recent theories in cognitive science have formalized this hypothesis in a \emph{hierarchical} Bayesian model \cite{hawkins2020generalizing}.
In this model, the prior $\Theta$ that an agent brings into subsequent interactions is updated to reflect the overall distribution of partner-specific models $\theta^i$, thus balancing general and idiosyncratic language knowledge in a principled way.
For neural language models, however, there is an apparent tension between the strong KL regularization required to \emph{prevent} unwanted interference with background knowledge during partner-specific adaptation, leading to catastrophic forgetting, and the flexibility to generalize or transfer conventions to new communicative settings as required for language learning.
We do not want to regularize so strongly that agents memorize conventions only applying to a single image that is completely reset after each interaction; instead, we wish to obtain a gradient of generalization across both referents and partners as a function of similarity \cite{markman1998referential}.

One promising solution to this problem, motivated by connections between hierarchical Bayes and algorithms like MAML \cite{finn2017model,grant_recasting_2018,nagabandi_deep_2018}, is to perform a meta-learning `outer loop' updating the initialization $\Theta$, taking into account the regularized, partner-specific `inner loop' of adaptation for each $\theta^i$.
In principle, a meta-learning approach for neural language learning would distill abstract, shared aspects of language into a unified $\Theta$, while still allowing for rapid \emph{ad hoc} conventionalization.
Still, cognitively plausible and scalable meta-learning algorithms remain an open area of research.


\paragraph{Limitations and future work}

While our evaluations were limited to a canonical CNN-RNN image captioning architecture, a key open question for future work is how our continual adaptation approach ought to be implemented for more complex, state-of-the-art architectures. 
One possibility, following the approach recently proposed by \citet{jaech_low-rank_2017}, is to allow context (e.g. partner identity) to control a low-rank transformation of the weight matrix such that online fine-tuning can take place in a more compact context embedding space \cite{jaech_personalized_2018}.

Furthermore, while we adapted the entire parameterized RNN module end-to-end, future work should explore the effect of limiting adaption to subcomponents (e.g.\ only word embeddings) or expanding adaptation to additional model components such as attention weights or high-level visual representations.
Beyond possible consequences for engineering better adaptive models, each of these variants corresponds to a distinct cognitive hypothesis about exactly \emph{which} representations are being adapted on the fly in human communication.

A final area for future work is generalizing the forms of social feedback that can be used as data $D^i$ for updating representations beyond the sparse choices in a reference game.
In particular, forms of \emph{repair} through question-asking or other non-referential dialogue acts may license stronger inferences about a partner's language model and allow misunderstandings to be resolved more quickly in challenging contexts \cite{drew1997open,DingemanseEtAl15_RepairUniversal,li2016learning}.
These forms of feedback may be particularly important for extending our approach beyond the benchmark task of repeated reference games to the more complex domains of real-world conversational tasks.

\paragraph{Conclusion}

Human language use is remarkably flexible, continuously adapting to the needs of the current situation.
In this paper, we introduced a challenging repeated reference game benchmark for artificial agents, which requires such adaptability to succeed.
We proposed a continual learning approach allowing agents to form context-specific conventions by fine-tuning general-purpose representations.
Even when pre-trained models initially perform inaccurately or inefficiently, our approach allows such models to quickly adapt to their partner's language in the given context and thus become more accurate and more efficient using common ground.

\section*{Acknowledgments}

This research was supported in part by a Stanford HAI Hoffman-Yee Research Grant, Office of Naval Research grant ONR MURI N00014-16-1-2007 and DARPA agreement FA8650-19-C-7923, as well as NSF award \#1911835 to RDH, and NSF award \#1941722 to DS.
We are grateful to audiences at the 2019 ICML Workshop on Adaptive and Multi-Task Learning, where an early version of this work was presented, and to three anonymous reviewers for their insightful comments.

\vspace{2em}
\fbox{\parbox[b][][c]{6.8cm}{\centering {All code and materials available at: \\
\href{https://github.com/hawkrobe/continual-adaptation}{\url{https://github.com/hawkrobe/continual-adaptation}}
}}}
\vspace{2em} \noindent

\bibliographystyle{acl_natbib}
\bibliography{../references.bib}

\renewcommand{\thefigure}{S\arabic{figure}}
\renewcommand{\thetable}{S\arabic{table}}

\newpage

\begin{figure*}[t]
\centering
\includegraphics[scale=0.9]{./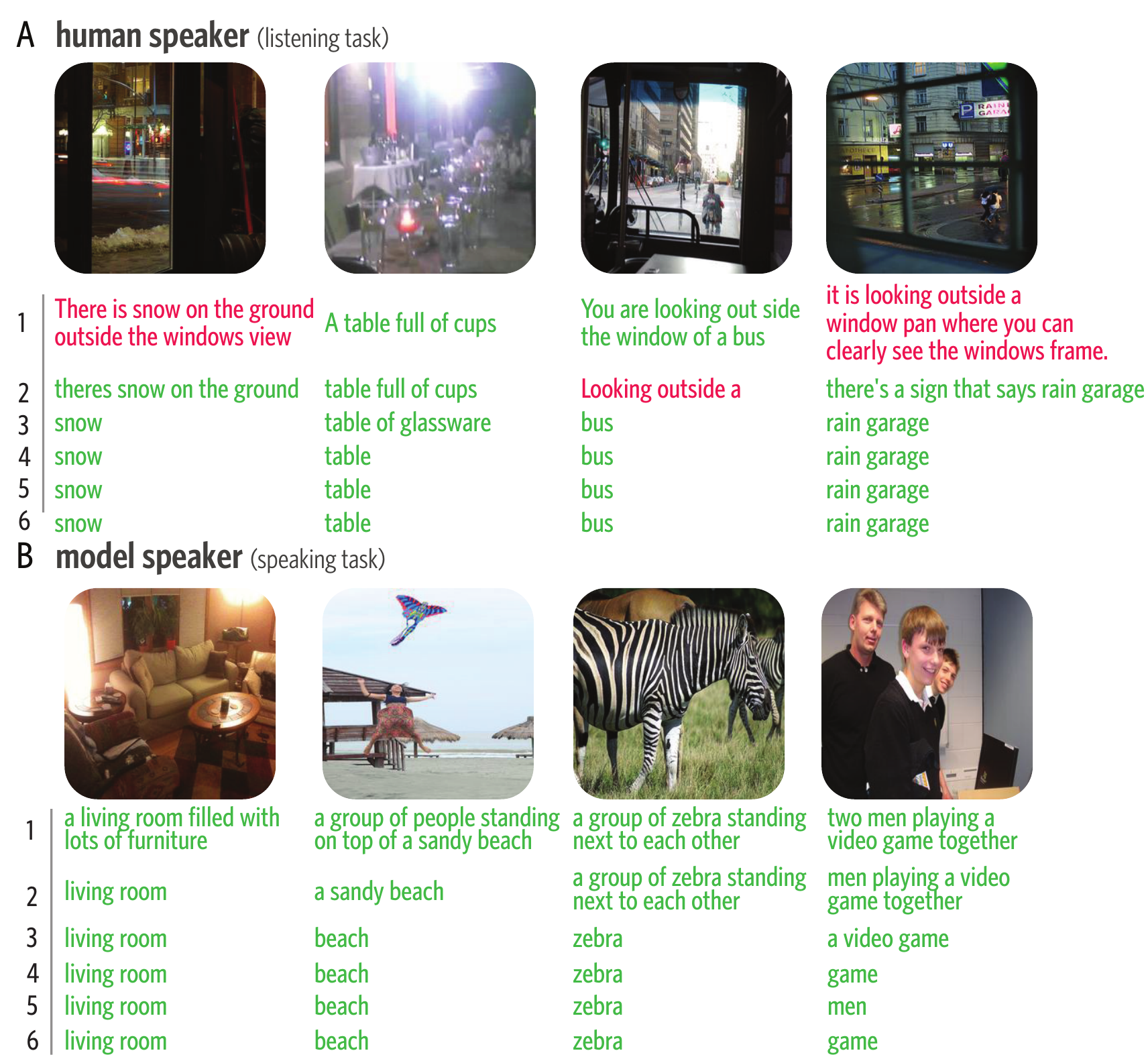}
\caption{Complete set of referring expressions produced by (A) a \emph{human speaker} interacting with our listener model and (B) our \emph{speaker model} interacting with a human partner, as described below in Appendix D. Utterances are color-coded with the response accuracy. Green is correct; red is incorrect.}
\label{fig:examples}
\end{figure*}

\section*{Appendix A: derivation of incremental KL}

We denote the distribution over a sequence of $T$ tokens by $p(w_{1:T}) = p(w_1, w_2, \dots, w_T)$.
We are interested in the KL divergence between two such distributions, $\KL{p(w_{1:T})}{q(w_{1:T})}$.
We show that our approximation over possible captions is the best incremental estimator of this intractable objective. 
First, note that the KL divergence factors in the following way.

\begin{lemma}
\begin{equation*}
\begin{split}
& \KL{p(w_{1:2})}{q(w_{1:2})}  \\
& = \, \KL{p(w_1)}{q(w_1)} \\
 & \quad + \mathbb{E}_{p(w_1)} \KL{p(w_2 | w_1)}{q(w_2 | w_1)}\\
 \end{split}
\end{equation*}
\end{lemma}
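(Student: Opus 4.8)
The plan is to prove this directly from the definition of KL divergence by exploiting the chain rule for probability, $p(w_{1:2}) = p(w_1)\,p(w_2 \mid w_1)$, and likewise for $q$. First I would write
\begin{equation*}
\KL{p(w_{1:2})}{q(w_{1:2})} = \sum_{w_1, w_2} p(w_1, w_2) \log \frac{p(w_1, w_2)}{q(w_1, w_2)},
\end{equation*}
and then substitute the factored forms in both numerator and denominator so that the log ratio becomes $\log\frac{p(w_1)}{q(w_1)} + \log\frac{p(w_2 \mid w_1)}{q(w_2 \mid w_1)}$. Splitting the sum across this sum of logs gives two pieces.

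For the first piece, I would marginalize out $w_2$: since $\sum_{w_2} p(w_1, w_2) = p(w_1)$ and the factor $\log\frac{p(w_1)}{q(w_1)}$ does not depend on $w_2$, the term collapses to $\sum_{w_1} p(w_1) \log\frac{p(w_1)}{q(w_1)} = \KL{p(w_1)}{q(w_1)}$. For the second piece, I would rewrite $p(w_1, w_2) = p(w_1)\,p(w_2 \mid w_1)$ and regroup the sum as $\sum_{w_1} p(w_1) \sum_{w_2} p(w_2 \mid w_1) \log\frac{p(w_2 \mid w_1)}{q(w_2 \mid w_1)}$, recognizing the inner sum as $\KL{p(w_2 \mid w_1)}{q(w_2 \mid w_1)}$ and the outer sum as the expectation $\mathbb{E}_{p(w_1)}$ of that quantity. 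Summing the two pieces yields the claim.

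There is no real obstacle here — the argument is a routine rearrangement — so the only things to be careful about are bookkeeping: keeping track of which variables each factor depends on when splitting the double sum, and implicitly assuming absolute continuity ($q(w_1, w_2) = 0 \Rightarrow p(w_1, w_2) = 0$) so that all logs are well defined, with the usual convention $0 \log 0 = 0$. If one wants full generality the sums should be replaced by integrals against the appropriate base measure, but the manipulation is identical. I would also remark that the same decomposition extends by induction to $p(w_{1:T})$, which is what licenses the incremental approximation used in \eref{eq:speaker_reg}.
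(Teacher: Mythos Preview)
Your proposal is correct and follows essentially the same route as the paper: write out the definition of $\KL{p(w_{1:2})}{q(w_{1:2})}$ as a double sum, factor both joints via the chain rule so the log splits into $\log\frac{p(w_1)}{q(w_1)} + \log\frac{p(w_2\mid w_1)}{q(w_2\mid w_1)}$, then marginalize $w_2$ in the first piece and regroup the second as an expectation over $w_1$ of a conditional KL. Your remarks on absolute continuity and the inductive extension to $w_{1:T}$ are sound and go slightly beyond what the paper states explicitly.
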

\begin{proof}
\phantom\qedhere
\begin{equation*}
\begin{split}
& \KL{p(w_{1:2})}{q(w_{1:2})} \\
& = \sum_{w_1}\sum_{w_2}p(w_{1:2})\log\frac{p(w_{1:2})}{q(w_{1:2})} \\
& =   \sum_{w_1}\sum_{w_2}p(w_{1:2})\log\frac{p(w_1)}{q(w_1)}  \\
 &\quad + \sum_{w_1}\sum_{w_2}p(w_{1:2})\log\frac{p(w_2|w_1)}{q(w_2|w_1)} \\
 & = \sum_{w_1}\log\frac{p(w_1)}{q(w_1)}\sum_{w_2}p(w_{1:2}) \\
 &\quad + \sum_{w_1}p(w_1)\sum_{w_2}p(w_2 | w_1)\log\frac{p(w_2|w_1)}{q(w_2|w_1)} \\
 & = \KL{p(w_1)}{q(w_1)} \\
 & \quad + \mathbb{E}_{p(w_1)} \KL{p(w_2 | w_1)}{q(w_2 | w_1)}
\end{split}
\end{equation*}
\end{proof}

Now, let $w^{*}_1$ be the token at which $p(w_1)$ takes its maximum value.
Then $w^{*}_1$ is the best single-sample approximation of the expectation:
\begin{equation*}
\begin{split}
& \mathbb{E}_{p(w_1)} \KL{p(w_2 | w_1)}{q(w_2 | w_1)} \\
& \approx \KL{p(w_2 | w^{*}_1)}{q(w_2|w^{*}_1}\\
\end{split}
\end{equation*}
If we assume that $p(w_{1:T})$ is Markov (as in a recurrent model) then it follows from repeatedly applying the lemma that
\begin{equation*}
\begin{split}
& \KL{p(w_{1:T})}{q(w_{1:T})} \\
& = \sum_{i=1}^T \KL{p(w_i | w^{*}_1, \dots, w^*_{i-1})}{q(w_i|w^*_1\dots, w^*_{i-1}}\\
& = \sum_{i=1}^T \KL{p(w_i | w^*_{i-1})}{q(w_i | w^*_{i-1})}
\end{split}
\end{equation*}
recovering our objective.


\section*{Appendix B: Parameter settings}

For both the speaker task and listener task, we used a learning rate of 0.0005, took 6 gradient steps after each trial, and used a batch size of 8 when sampling utterances from the augmented set of sub-phrases, 
At each gradient step, we sampled 50 objects from the full domain $\mathcal{O}$ of COCO to approximate our regularization term.
We set the coefficients weighting each term in our loss function as follows: 1.0 (utterance loss), 0.1 (contrastive loss), 0.5 (KL regularization), 0.3 (local rehearsal).

 \begin{figure*}[t]
\centering
\includegraphics[scale=0.95]{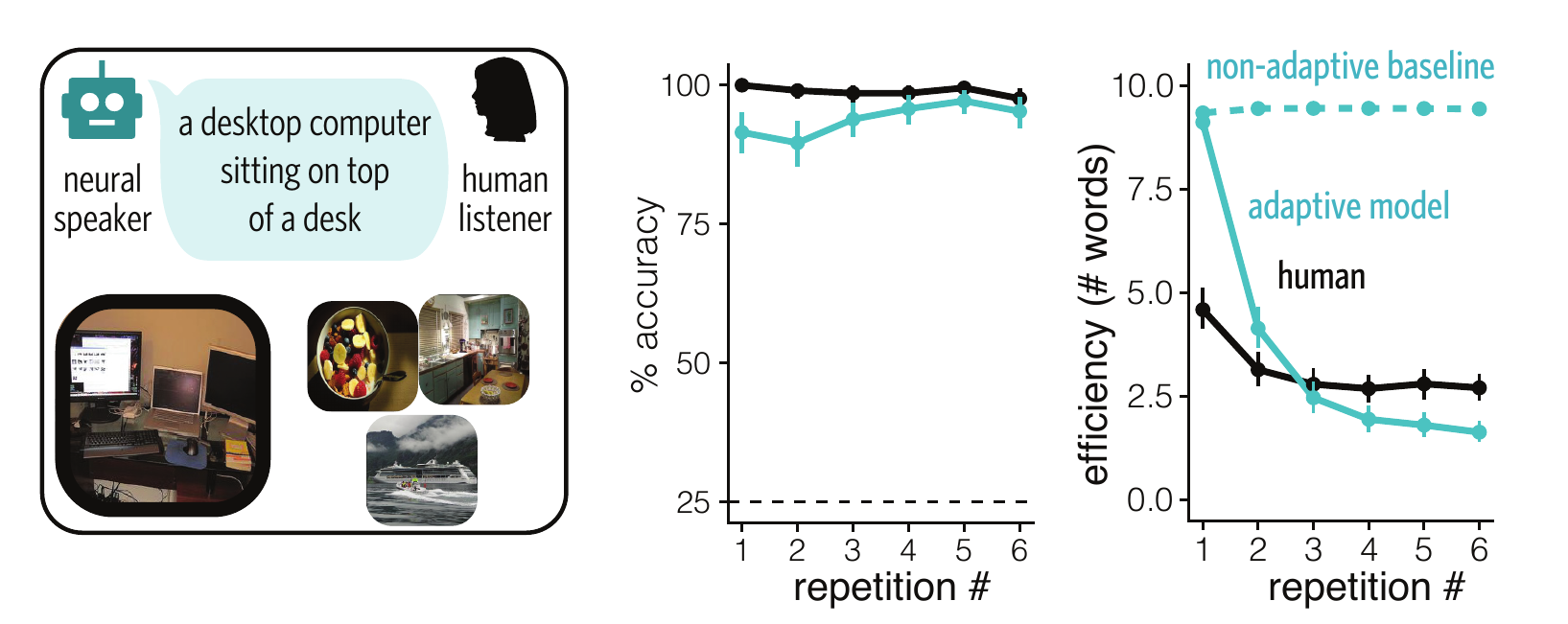}
\caption{Speaker model evaluations with human listener. Error ribbons are bootstrapped 95\% CIs.}
\label{fig:supplementalcurves}
\end{figure*}

\section*{Appendix C: Regression details}

To formally test increasess in efficiency reported for baseline pairs of humans in Sec.~4.1 (see Fig. 3 and \ref{fig:supplementalcurves}), we conducted a mixed-effects regression predicting utterance length. 
We included a fixed effect for context type (i.e. `simple' vs. `challenging') as well as orthogonalized linear and quadratic effects of repetition number, and each of their interactions with context type. 
We also included random intercepts accounting for variability in initial utterance length at the pair- and image-level.
To test increases in the adaptive listener model's accuracy in Sec.\ 4.2, we conducted a mixed-effects logistic regression on trial-level responses (i.e. `correct' vs. `incorrect') with the same effect structure, removing context type, as this experiment was only conducted on challenging contexts.
This same effect structure was used to test improvements in the adaptive speaker model's efficiency in Sec.~4.3. 

We tested our listener ablations in Sec.\ 5.2 using a mixed-effects logistic regression with fixed effects of repetition number and model variant, as well as participant-level random intercepts and slopes (for repetition number). We dummy-coded the models setting the baseline at the full model, to facilitate direct comparison.

\section*{Appendix D: Speaker evaluation and analysis}

\subsection*{Task description}

We designed the speaking task such that the pre-trained model's \emph{efficiency} --- the number of words needed to identify the target --- would be poor at the outset. 
Because the COCO captions seen during pre-training were relatively exhaustive (i.e.~mentioning many attributes of the image), we required \emph{simple} contexts where the pre-trained model would produce more complex referring expressions than required to distinguish the images. 
To construct simple contexts we sampled images randomly from different COCO category labels.
For example, one context might contain an image of an elephant, an image of a boat, and so on.

\subsection*{Evaluation results}
\label{sec:speaking_task}

We evaluated our model in the \emph{speaking} task using simple contexts, which requires the model to form more efficient conventions given feedback from human responses.
53 participants from Amazon Mechanical Turk were paired to play the listener role with our speaker model.
Utterances were selected from the LSTM decoder using beam search with a beam width of 50 and standard length normalization to mitigate the default bias against long utterances \cite[e.g.][]{wu2016google}.
After producing an utterance, the model received feedback about the listener's selection. 
If its partner correctly selected the intended target, it proceeded to adapt conditioning on the new observation; in the event of an incorrect response, it refrained from updating.
This strategy thus only leads to inferences about utterance meanings (and sub-phrase meanings, through data augmentation) after positive evidence of understanding.

As expected, the model starts with much longer captions than human speakers use in simple contexts (Fig.\ \ref{fig:supplementalcurves}).
It uses nearly as many words for simple contexts as humans used for challenging contexts. 
However, it gets dramatically more efficient over interaction while maintaining high accuracy. 
We found a significant decrease in utterance length over successive repetitions, $t=35,~p <0.001$, using the same mixed-effects regression structure reported above.
A non-adapting baseline shows no improvement, as it has no mechanism for changing its expectations about utterances over time.

\begin{table*}[h!]
\resizebox{\textwidth}{!}{%
\begin{tabular}{p{4cm} | p{3.5cm} | p{3cm} | p{2.7cm} | p{1.6cm} | p{1.3cm}}
Rep.~1 & Rep.~2 & Rep.~3 & Rep.~4 & Rep.~5 & Rep.~6 \\
\hline
a group of people standing on a sandy beach & a group of people standing on top        & a group of people standing          & a group of people & a group of & a group  \\
\hline
a couple of zebra standing next to trees    & a couple of zebra standing next          & a couple of zebra                   & a couple of       & a couple   & a couple \\
\hline
a living room filled with lots of furniture & a living room filled with lots furniture & a living room filled with furniture & a living room     & a living   & a living
\end{tabular}%
}
\caption{Examples of utterances produced by ablated speaker with pure cost penalty instead of data augmentation, which quickly become ungrammatical and incoherent.}
\label{tab:badbad_examples}
\end{table*}

\subsection*{Pragmatic reasoning supports speaker informativity}
\label{sec:pragmatic_analysis}

We now proceed to analyze the \emph{speaking task}, beginning with the role of pragmatic reasoning.
In principle, incorporating pragmatic reasoning during adaptation (i.e. in our contrastive likelihood term) introduces an inductive bias for \emph{mutual exclusivity} \cite{SmithGoodmanFrank13_RecursivePragmaticReasoningNIPS, FrankGoodmanTenenbaum09_Wurwur,gandhi2019mutual}.
When the listener correctly selects the target, the speaker not only learns that the listener believes this is a good description for the target but can also infer that the listener \emph{does not} think it is a good description for the other objects in context; otherwise, they would have selected one of the other objects instead.
Thus, in addition to boosting listener adaptation, we expected explicit pragmatic reasoning to allow the speaker to gradually produce more informative, distinguishing utterances. 

The \emph{challenging} contexts provide an ideal setting for evaluating speaker pragmatics, because the pre-trained speaker model initially produces the same caption for all four images in context.
We simulated games with our adaptive speaker as well as an ablated variant with no contrastive term in its adaptation objective.
The model was always given feedback that the correct target was selected.
We measured informativity by examining the proportion of words that overlapped between the utterances produced for the different images in a particular context: $|u_i \cap u_j| / \min(|u_i|, |u_j|)$ for combinations of utterances ($u_i,u_j$) where $i \neq j$. 
This measure ranges between 0\% when the intersection is empty and 100\% in the event of total overlap. 

Even though both the full model and the ablated variant initially produce completely overlapping utterances, we found a rapid differentiation of utterances for the model with pragmatics intact, as each image becomes associated with a distinct label.
Meanwhile, the ablated version continues to have high overlap even on later repetitions: it often ends up producing the same one-word label for multiple objects as it reduces (Fig.\ \ref{fig:speakerablation}A).

 \begin{figure}[b!]
\centering
\includegraphics[scale=0.7]{./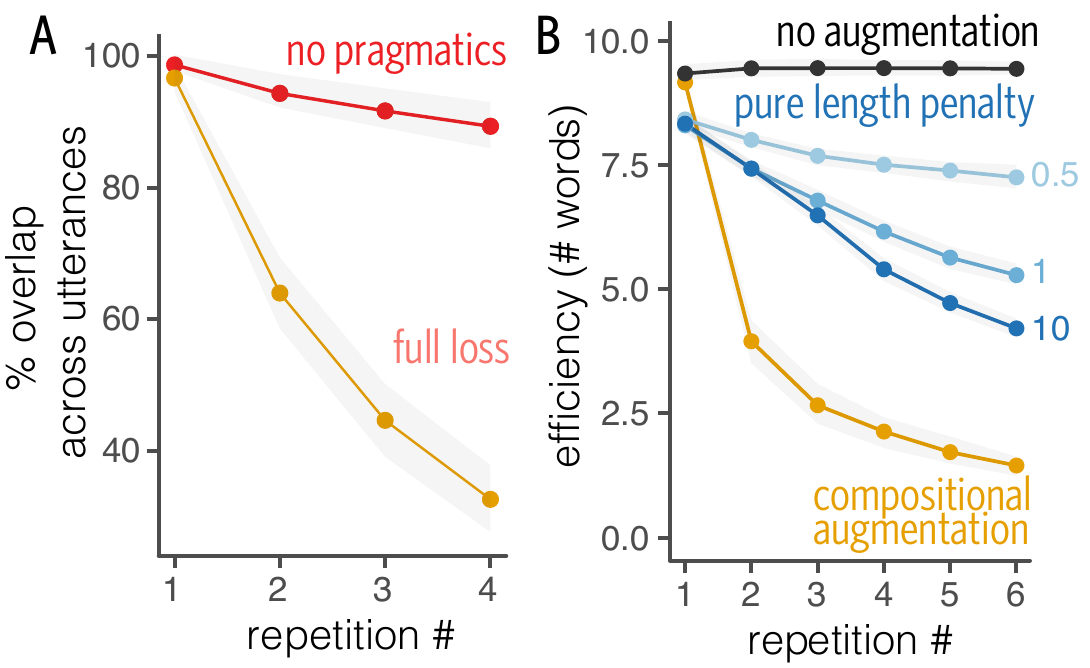}
\vspace{-2em}
\caption{Speaker model ablations. (A) The contrastive loss allows the model to become informative in challenging contexts. (B) Compositionally augmenting adaptation data with sub-phrases of the utterance allows stronger gains in efficiency than a simple length penalty. Error ribbons are bootstrapped 95\% CIs.}
\vspace{-.5em}
\label{fig:speakerablation}
\end{figure}

\subsection*{Compositional data augmentation supports efficiency}
Finally, we investigated the role played by the compositional data augmentation mechanism for allowing our speaker model to become more efficient (Fig.\ \ref{fig:speakerablation}B).
Two concerns may be raised about this mechanism.
First, it is possible that the RNN decoder architecture is already able to appropriately update expectations about sub-parts from the whole without being given explicit parses, so augmentation is redundant. 
Second, it may be argued that this augmentation mechanism just imposes a glorified length penalty \emph{forcing} the speaker to shorten, rather than allowing efficiency to come out of the model naturally.

To address these concerns, we compare augmentation with two variants: (1) an ablated model with no augmentation, and (2) an alternative mechanism that explicitly imposes a length cost at production time. 
This alternative is implemented by re-ranking the top 25 utterances from beam search according to $U(u_i) = P(u_i)/\ell(w) - \beta_w \ell(w)$ where the first term is the length-normalized beam-search objective and the second term is an explicit bias for shorter utterances.
When $\beta_w=0$, this is equivalent to top-$k$ beam search but as $\beta_k \rightarrow \infty$, the model will increasingly prefer short utterances.

We simulated the behavior of these model variants in each of the 53 games we collected in our interactive speaking task, using the same sequence of images and feedback about correctness. 
We found that that the ablated model with no augmentation fails to become more efficient: at least for our RNN decoder architecture, evidence of success only reinforces expectations about the full caption; it cannot not propagate this evidence to the individual parts of the utterance.
A sufficiently high length penalty does allow utterances to become shorter, but reduces linearly rather than quadratically (as humans do; \citealp{ClarkWilkesGibbs86_ReferringCollaborative,hawkins2019characterizing}).
Moreover, upon inspecting the utterances produced by this variant (see Table \ref{tab:badbad_examples}), we found that it simply cuts off the ends of utterances, whereas compositional data augmentation is based on a syntactic parse and thus allows the model to preserve grammaticality and gradually build expectations about meaningful sub-units.
In sum, we find that our augmentation mechanism is not reducible to a coarse length bias, and is able to compensate for representation failures in current recurrent architectures \cite{dasgupta2018evaluating, nikolaus2019compositional}.\footnote{We expect that a more structured architecture, able to propagate evidence about the meaning of a full utterance to representations of intermediate semantic units, would make this augmentation step redundant \cite[e.g.][]{tai2015improved,gan2017semantic,mccoy_does_2020}.}

\end{document}